\pgfplotsset{cycle list/Set1-8}
\pgfplotsset{compat=1.18}
\newtheorem{theorem}{Theorem}
\title{Scalable Interconnect Learning in Boolean Networks}
\author{%
  Fabian Kresse \\
  Institute of Science and Technology Austria\\
  Klosterneuburg, AT 3400 \\
  \texttt{fabian.kresse@ista.ac.at} \\
  \And
  Emily Yu \\
  Institute of Science and Technology Austria \\
  Klosterneuburg, AT 3400 \\
  \texttt{emily.yu@ista.ac.at} \\
  \And
  Christoph H. Lampert \\
  Institute of Science and Technology Austria \\
  Klosterneuburg, AT 3400 \\
  \texttt{chl@ista.ac.at} \\
}
\begin{document}

\maketitle

\begin{abstract} 
Learned Differentiable Boolean Logic Networks (DBNs) already deliver efficient inference on resource-constrained hardware. We extend them with a trainable, differentiable interconnect whose parameter count remains constant as input width grows, allowing DBNs to scale to far wider layers than earlier learnable-interconnect designs while preserving their advantageous accuracy. To further reduce model size, we propose two complementary pruning stages: an SAT-based logic equivalence pass that removes redundant gates without affecting performance, and a similarity-based, data-driven pass that outperforms a magnitude-style greedy baseline and offers a superior compression–accuracy trade-off.
\end{abstract}

\section{Introduction}

Deploying deep neural networks on edge hardware is often limited by the silicon area and energy demanded by multiply‑accumulate (MAC) units. Architectures learned directly in the Boolean domain, where weights and activations are binary and computation reduces to simple logic gates, sidestep MACs altogether.

Petersen et al. \cite{petersen2022deep,petersen2024convolutional} first introduced Differentiable Logic Networks (DLNs), training them by relaxing each logic gate into a softmax mixture over the 16 possible Boolean functions and learning a weight for each. After training, a DLN operates entirely on hard, two-valued logic (0 or 1). Inputs are discretized, propagated through successive Boolean gates (e.g., XOR, OR), and the final class label is obtained by majority vote over grouped Boolean outputs. Because every operation is a simple logic gate, the resulting circuit can be mapped directly to an FPGA or ASIC, enabling ultra-fast, energy-efficient inference. For instance, when extended to convolution-style logic trees \cite{petersen2024convolutional}, this scheme reaches \(\approx 80\%\) CIFAR-10 accuracy in just 24 ns per image on an FPGA. We refer to any such hardened Boolean circuit, whether produced by a DLN or by later variants, as a Deep Boolean Network (DBN). The explicit circuit structure of DBNs also lends itself to SAT-based symbolic verification \cite{kresse2025}, making them well-suited to safety-critical embedded applications.

The original DLN architecture fixes the connections between layers at random, even though input features rarely carry equal information. On MNIST, for example, corner pixels add virtually no class signal. To overcome this, Bacellar et al. \cite{bacellar2024differentiable} proposed Deep Weightless Networks (DWNs), which use a straight-through differentiable training rule to learn both higher-arity Boolean gates and the input-to-first-layer interconnect, achieving Pareto gains in gate count and accuracy. The trade-off is that DWNs encode the interconnect as a dense parameter matrix whose size grows linearly with input width, limiting the scalability of layer width. Note that DLNs must be hardened after training to become DBNs, whereas DWNs remain in purely Boolean form throughout. Nevertheless, both ultimately yield DBNs and share the same deployment pipeline.

In this work, we build on these foundations by introducing a scalable learnable interconnect whose parameter count does not grow with the number of input bits. Our interconnect learning scheme recovers baseline accuracy and, more crucially, enables scaling of the learnable interconnect to wider layer sizes. As our method scales gracefully with input layer size, we also, for the first time, systematically investigate the impact of training the interconnect of layers beyond the first layer. Furthermore, in order to reduce network sizes, we investigate pruning schemes based on proofs of logical equivalence and data-driven heuristics that shrink the learned circuits. We show that for small network sizes, pruned networks achieve the same accuracy as directly training smaller networks of this size.

\pgfplotsset{
    discard if not/.style 2 args={
        x filter/.code={
            \edef\tempa{\thisrow{#1}}
            \edef\tempb{#2}
            \ifx\tempa\tempb
            \else
                \def\pgfmathresult{inf}
            \fi
        }
    }
}

\makeatletter
\pgfplotstableset{
    discard if not/.style 2 args={
        row predicate/.code={
            \def\pgfplotstable@loc@TMPd{\pgfplotstablegetelem{##1}{#1}\of}
            \expandafter\pgfplotstable@loc@TMPd\pgfplotstablename
            \edef\tempa{\pgfplotsretval}
            \edef\tempb{#2}
            \ifx\tempa\tempb
            \else
                \pgfplotstableuserowfalse
            \fi
        }
    }
}
\makeatother

\section{Differentiable Interconnect Learning}

In this section, we first give a formal definition of Differentiable Boolean Logic Networks (DBNs). We then present our differentiable interconnect learning framework, augmented with a gate-sampling mechanism that enables exploration of the whole parameter space, and we discuss how we extend interconnect learning beyond the first layer.

\subsection{Deep Differentiable Boolean Networks}

Let \(\mathcal{G}=(\mathcal{V},\mathcal{E})\) be a directed acyclic graph with vertices grouped in layers. 
Each vertex \(v\in\mathcal{V}\) (henceforth also gate) computes a Boolean function 
\(f_v:\{0,1\}^{k}\!\to\{0,1\}\) of fixed arity \(k\) (two for classic DLNs \cite{petersen2022deep}, possibly more in DWNs \cite{bacellar2024differentiable}). Edges \(\mathcal{E}\) route binary signals from one layer to the next; the set of incoming edges to \(v\) selects the \(k\) ordered input variables for \(f_v\). While each gate has \(k\)-input edges, it may have an arbitrary number of output edges. This framework fits both DLNs and DWNs.

\subsection{Interconnect Learning}

    \begin{wrapfigure}{r}{0.5\textwidth}
    
  \begin{center}
\begin{circuitikz}[
    american,
    scale=0.8, transform shape,   %
    every node/.style={font=\footnotesize},
    >=stealth, line cap=round
]

\node[and port,
      rotate=-90,
      scale=0.6,
      label={[rotate=0,                 %
              yshift=3.7mm,             %
              font=\large]              %
             center:$1$}
     ] (u1) at (0,0) {};
     
\node[and port, rotate=-90, scale=0.6,
      label={[rotate=0,                 %
              yshift=3.7mm,             %
              font=\large]              %
             center:$2$}
     ] (u2) at (0.8,0) {};
\node[and port, rotate=-90, scale=0.6,
      label={[rotate=0,                 %
              yshift=3.7mm,             %
              font=\large]              %
             center:$3$}
     ] (u3) at (1.6,0) {};
\node[and port, rotate=-90, scale=0.6,
      label={[rotate=0,                 %
              yshift=3.7mm,             %
              font=\large]              %
             center:$4$}
     ] (u4) at (2.4,0) {};
\node[and port, rotate=-90, scale=0.6,
      label={[rotate=0,                 %
              yshift=3.7mm,             %
              font=\large]              %
             center:$5$}
     ] (u5) at (3.2,0) {};
\node[and port, rotate=-90, scale=0.6,
      label={[rotate=0,                 %
              yshift=3.7mm,             %
              font=\large]              %
             center:$6$}
     ] (u6) at (4.0,0) {};

\node[and port, scale=0.9,rotate=-90,
      label={[rotate=0,                 %
              yshift=5.7mm,             %
              font=\large]              %
             center:$v$}] (U) at (2.3,-2.0) {};

\draw[line width=0.69pt] (u1.out) -- (U.in 1);   %
\draw[line width=1.34pt] (u6.out) -- (U.in 1);   %
\draw[line width=1.5pt] (u3.out) -- (U.in 1);   %
\draw[line width=1.2pt] (u4.out) -- (U.in 1);   %

\draw[line width=0.43pt] (u1.out) -- (U.in 2);   %
\draw[line width=0.5pt] (u6.out) -- (U.in 2);   %
\draw[line width=1.6pt] (u3.out) -- (U.in 2);   %
\draw[line width=1.3pt] (u4.out) -- (U.in 2);   %

\node[anchor=west] at (4.6,  0.4) {$\mathbf{candidates} : [1,3,4,6]$};
\node[anchor=west] at (4.6,  -0.4) {$\boldsymbol{weights_1}\!: [0.1,0.8,0.5,0.2]$};
\node[anchor=west] at (4.6, -1.2) {$\boldsymbol{weights_2}\!: [0.3,0.7,0.4,0.6]$};

\end{circuitikz}

  \end{center}
  \caption{Illustration of the learnable interconnect mechanism. Candidates denote the tensor storing the possible connections of the gate $v$ to the previous layer. Weights store the connection strength of inputs $1$ and $2$ to the specific gates in the candidates tensor.}
  \label{fig:learnable interconnect}
\end{wrapfigure}
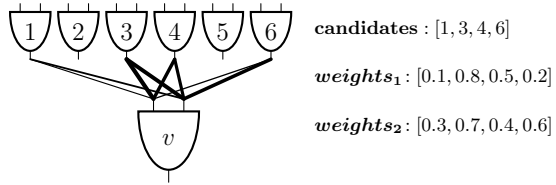

Each gate must choose its \(k\) inputs from the \(I\) binary signals produced by the previous layer (or the binarized input). A naïve, fully parameterised approach assigns a real-valued weight to every possible connection, requiring a learnable weight tensor \( W \in \mathbb{R}^{G \times k \times I}\) whose axes index gates \(g = 1,\dots,G\), input slots \(j = 1,\dots,k\), and candidate bits \(i = 1,\dots,I\).
Recent works utilize this full-sized parameter tensor $W$, or a subset, to assign connections. Bacellar et al.~\cite{bacellar2024differentiable} and Yue et al. \cite{yue2024learninginterpretabledifferentiablelogic}, adopt a straight-through estimator (STE): the forward pass takes the hard gate input-wise \(\arg\max\) decision, while the backward pass is computed according to a Softmax.

The batch-wise straight-through estimator (STE) gradient for the connection weights used by \cite{bacellar2024differentiable}, and adopted in our work, is defined as follows. With a mini-batch of binary inputs \(x \in \{0,1\}^{B \times I}\) and upstream gradients \(dy \in \mathbb{R}^{B \times G \times k}\), the gradient of the loss with respect to the weight \(w_{g,j,i}\) for the connection \(i \!\rightarrow\! (g,j)\) is
\begin{equation}
\label{eq:gradient}
  g_{g,j,i} \;=\; \sum_{b=1}^{B}(2x_{b,i}-1)\,dy_{b,g,j}
\end{equation}
so more negative \(g_{g,j,i}\) suggests a greater potential loss reduction if that connection’s weight were increased.

The approach of Bacellar et al. \cite{bacellar2024differentiable} requires the full \(G \times k \times I\) weight tensor and thus scales quadratically when \(I\) is of similar magnitude as \(G\). In the proposed method, we reduce the interconnect to \(O(G)\) parameters by assigning each gate only \(C\!\ll\! I\) candidate inputs.  This approach of selecting only a subset of candidate indices is equivalent to the one taken in \cite{yue2024learninginterpretabledifferentiablelogic}. The weight tensor therefore has shape \(G \;\times\; k \;\times\; C \), reducing memory and computation by a factor of \(I/C\). We store the \(C\) candidate connections for each gate in an additional tensor. In line with~\cite{bacellar2024differentiable}, we employ the same STE gradient estimator, utilizing the hard \(\arg\max\) connection choice in the forward pass and updating connections with the surrogate gradient from Eq. \eqref{eq:gradient}. Figure \ref{fig:learnable interconnect} visualizes the learnable interconnect mechanism for $k=2$ and $C=4$.

\subsection{Gate Sampling}

As each gate observes only a small candidate set in our reduced interconnect, we periodically refresh that set to explore the full input space. Every \(\beta\) training steps we locate, for each gate, the \(R\) lowest-weighted connections—those gradient descent currently regards as least useful—and replace them. The freshly inserted candidates inherit the smallest weight that was \emph{not} replaced, so they can compete immediately. Pseudocode for this algorithm is given in Alg. \ref{alg:gate_sampling}.

For the replacement we propose two sampling functions. In the \emph{random} scheme the new connections are drawn uniformly from the entire set of \(I\) candidate inputs. In the \emph{gradient-guided} scheme we compute the surrogate gradients of Eq.~\eqref{eq:gradient} with respect to all \(I\) inputs for one mini-batch and insert the \(R\) candidates with the most negative gradients, i.e.\ those expected to reduce the loss most strongly. Although evaluating the full gradient costs \(O(I\,Gk)\) operations, we never need to materialize the complete weight tensor: we can maintain a running top-\(R\) set, keeping memory usage independent of \(I\). 

\begin{algorithm}[t]
\caption{Layer-wise gate-sampling refresh executed every $\beta$ training steps. For each gate-slot pair, the algorithm replaces the $R$ weakest candidate connections with newcomers drawn by either a \emph{random} or \emph{gradient-guided} sampling rule.}
\label{alg:gate_sampling}
\begin{algorithmic}[1]
\Require
  weight tensor \(W \in \mathbb{R}^{G\times k\times C}\)           \Comment{trainable connections}
\Require
  candidate tensor \(\mathcal{C} \in \{1,\dots,I\}^{G\times k\times C}\) \Comment{global input indices}
\Require
  current step, refresh interval \(\beta\), replacement budget \(R\), last–batch \((x,dy)\)
  \If{\( \text{step} \bmod \beta = 0 \)} \Comment{refresh every \(\beta\) steps}
    \For{$g=1$ \textbf{to} $G$}
      \For{$j=1$ \textbf{to} $k$}
        \State $L\gets R$ smallest indices in $W[g,j,:]$
        \State $w_{\text{floor}} \gets \min_{c\notin L} W[g,j,c]$
        \State $N \gets \Call{Sample}{R,I,g,j,x,dy}$ \Comment{$R$ new global indices}
        \For{$r=1$ \textbf{to} $R$}         \Comment{replace each weak candidate}
            \State $\mathcal{C}[g,j,L_r]\gets N_r$
            \State $W[g,j,L_r]\gets w_\text{floor}$
        \EndFor
      \EndFor
    \EndFor
  \EndIf
\end{algorithmic}
\end{algorithm}
\subsection{Layer-Wise Optimization}

We propose a full layer-wise schedule for optimizing the interconnect of all layers: training starts with the first layer’s connection weights unfrozen, while all gates in the entire network are trainable. After a fixed number of iterations, both the connections and the gates of that layer are frozen, and the procedure moves on to the next layer, unfreezing the corresponding interconnect. The process repeats until every layer has optimised its interconnect. A final pass then fine-tunes all gate parameters while keeping every interconnect fixed. This sequential strategy allows each layer to benefit from the stable representations established below it. %

\section{Pruning}

We present four pruning techniques tailored to DBNs. The first, \emph{trivial pruning}, deletes gates whose outputs are never consumed by any downstream logic. The remaining three methods are novel: \emph{logic equivalence pruning} detects semantically redundant gates using SAT-based analysis, hence subsuming the constant-gate pruning used in earlier work \cite{bacellar2024differentiable,petersen2024convolutional}. The other two methods, \emph{greedy} and \emph{similarity}-based pruning, utilize data-driven criteria to reduce gate count. As traditional pruning techniques typically rely on some notion of connection magnitude, which is not directly applicable in DBNs, we utilize the magnitude-inspired greedy pruning scheme as our data-driven pruning baseline.

\paragraph{Logic equivalence pruning.}
To eliminate redundant computation, we remove gates that realise the same Boolean function within a layer. Each input bit \(x_i\) is treated as a propositional variable and propagated symbolically through the network, so that every gate \(f_g\) carries an explicit Boolean expression \(f_g(x_1,\dots,x_I)\). Two gates are \emph{logically equivalent}, written \(f_i \equiv f_j\), if their expressions evaluate identically on \emph{all} possible input assignments, that is, \(f_i(\mathbf{x}) = f_j(\mathbf{x})\) for all \(\mathbf{x} \in \{0,1\}^I\).  If two such gates exist in a layer, we may safely delete one and reroute its fan-out to the other without changing the network's output. As there are no logically equivalent gates left after this, we only have to perform this sweep once. The soundness of the procedure is formalised in Theorem~\ref{thm:soundness}.

\vspace{2pt}
\begin{theorem}[Soundness of logic equivalence pruning]
\label{thm:soundness}
Let a DBN layer \(L_\ell=(f_1,\dots,f_G)\) contain \(f_i\equiv f_j\).
Delete \(f_j\) and redirect every edge that reads \(f_j\) to read \(f_i\),
yielding a network \(\mathcal N'\).
Then \(\mathcal N(\mathbf x)=\mathcal N'(\mathbf x)\;\forall\mathbf x\in\{0,1\}^I\).
\end{theorem}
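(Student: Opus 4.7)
The plan is to reduce the claim to a straightforward structural induction over layer depth, exploiting the fact that the only behavioural change introduced by the rewrite is local to layer $\ell$ and its immediate consumers. Fix an arbitrary input $\mathbf{x}\in\{0,1\}^I$ and write $h^{(\ell)}_v(\mathbf x)$ and $h'^{(\ell)}_v(\mathbf x)$ for the value produced at gate $v$ of layer $\ell$ in $\mathcal N$ and $\mathcal N'$ respectively. The goal is to prove $h^{(L)}_v(\mathbf x)=h'^{(L)}_v(\mathbf x)$ for every output gate $v$ at the final layer $L$.

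First I would establish the base case by observing that the input layer and every layer strictly below $\ell$ is untouched by the rewrite, so $h^{(m)}_v=h'^{(m)}_v$ for all $m<\ell$ and all $v$. At layer $\ell$ itself, each gate $f_g$ with $g\neq j$ keeps both its incoming wires and its Boolean function, so by the previous step its inputs coincide in the two networks, hence $h^{(\ell)}_g=h'^{(\ell)}_g$ for all surviving gates. The gate $f_j$ is removed in $\mathcal N'$ so $h'^{(\ell)}_j$ is formally undefined; however, the hypothesis $f_i\equiv f_j$ together with the fact that $f_i$ and $f_j$ see the same inputs from layer $\ell-1$ gives $h^{(\ell)}_i(\mathbf x)=h^{(\ell)}_j(\mathbf x)$. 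This is the only algebraic content of the proof and is immediate from the definition of logical equivalence quantified over all Boolean assignments.

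Next I would perform the inductive step for layers $m>\ell$. Consider any gate $v$ in layer $m$ with its $k$ ordered inputs. For a wire that originally read from some gate $u\neq f_j$ at a lower layer, the rewrite leaves it untouched, and by induction the value carried on that wire agrees in $\mathcal N$ and $\mathcal N'$. For a wire that originally read from $f_j$, the rewrite rebinds it to $f_i$; the value it now carries in $\mathcal N'$ is $h'^{(\ell)}_i=h^{(\ell)}_i=h^{(\ell)}_j$, which is exactly the value it carried in $\mathcal N$. Hence every input slot of $v$ receives the same bit in both networks, and since the Boolean function at $v$ is unchanged, $h^{(m)}_v=h'^{(m)}_v$. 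Iterating up to $m=L$ yields equality at the output layer, and since $\mathbf x$ was arbitrary, $\mathcal N\equiv\mathcal N'$ as functions $\{0,1\}^I\to\{0,1\}^{\text{out}}$.

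I do not expect a real obstacle here: the statement is essentially a substitution lemma. The only minor subtlety is notational, namely making precise that "redirect every edge that reads $f_j$ to read $f_i$" indeed preserves the input-ordering of every affected downstream gate, so that the arity-$k$ function at each such gate is evaluated on the same ordered tuple in both networks. Once this bookkeeping is fixed, the induction above closes the proof.
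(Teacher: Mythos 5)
Your proof is correct and follows essentially the same route as the paper's, which compresses the identical layer-wise induction into a single sentence. One tiny remark: the equality \(h^{(\ell)}_i(\mathbf x)=h^{(\ell)}_j(\mathbf x)\) does not require that the two gates share fan-in wires from layer \(\ell-1\) --- the paper defines \(f_i\equiv f_j\) as equality of the symbolically composed Boolean expressions over the network inputs \(\mathbf x\in\{0,1\}^I\), which already yields that equality directly.
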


\begin{proof}
Because \(f_i(\mathbf x)=f_j(\mathbf x)\) for all inputs, replacing each use of \(f_j\) by \(f_i\) leaves the truth value on every downstream connection unchanged; induction over layers completes the proof. \qedhere
\end{proof}

As a naïve all-pairs SAT check for determining logical equivalence costs \(O(G^{2})\) SAT-checks, our implementation applies a two-stage process: (i) \texttt{z3.simplify} performs heuristic, but sound, simplification of each $f$ followed by a hash (sum of surviving variable indices) to bucket gates; (ii) full SAT queries are only performed inside each bucket. Since the simplification procedure is sound, logic equivalence pruning remains sound as well.

Using SAT queries to merge semantically identical gates is established practice in logic synthesis (e.g.\ SAT sweeping~\cite{kuehlmann2004}). However, DBN layers differ from typical netlists: they are extremely \emph{wide} yet only a few levels deep.  In this regime the proposed simplify-plus-hash filter eliminates the vast majority of candidate pairs before any SAT call, giving the same pruning decisions as an exhaustive check in our experiments.

\paragraph{Greedy Pruning} Inspired by magnitude-based pruning techniques in traditional neural networks~\cite{han2015learning}, we perform data-driven pruning. During a forward pass over the training set, we track the $\mathtt{0}$ and $\mathtt{1}$ activations for each gate. If one of the two values occurs in at least a user-specified fraction of the cases (e.g., \(95\%\)), the gate is considered effectively constant and is replaced by that constant. This frequency threshold lets us control how “close” to a constant the output must be before pruning. This simple rule is the closest discrete analogue to magnitude pruning and therefore serves as our baseline for the subsequent data-driven pruning scheme.

\paragraph{Similarity Pruning} Additionally, we propose a pruning technique based on functional redundancy between gates. First, we perform a forward pass over the entire training set and record each gate’s binary activation ($\mathtt{0}$ or $\mathtt{1}$) for every input sample, yielding an activation vector for each gate. Next, for every pair of gates $(i,j)$, we compute the Spearman rank correlation coefficient $\rho_{ij}$ between their activation vectors. Given a correlation threshold $c$, we identify all pairs with $\rho_{ij} \ge c$ as highly similar. For each such pair, we remove one gate by redirecting the next-layers fan-ins that rely on this gate to its correlated counterpart fan-outs, thereby hopefully reducing the overall gate count without significantly altering the network’s behavior.

\section{Experiments} 
\label{sec:experiments}
\pgfplotsset{
    discard if not/.style 2 args={
        x filter/.code={
            \edef\tempa{\thisrow{#1}}
            \edef\tempb{#2}
            \ifx\tempa\tempb
            \else
                \def\pgfmathresult{inf}
            \fi
        }
    }
}

\makeatletter
\pgfplotstableset{
    discard if not/.style 2 args={
        row predicate/.code={
            \def\pgfplotstable@loc@TMPd{\pgfplotstablegetelem{##1}{#1}\of}
            \expandafter\pgfplotstable@loc@TMPd\pgfplotstablename
            \edef\tempa{\pgfplotsretval}
            \edef\tempb{#2}
            \ifx\tempa\tempb
            \else
                \pgfplotstableuserowfalse
            \fi
        }
    }
}
\makeatother

All experiments were performed on CIFAR-10 using the Adam optimizer \cite{kingma2014adam} with an initial learning rate of $10^{-2}$, decayed to $10^{-5}$ via a cosine schedule, and a batch size of 100. For binarization, we adopt Distributive Thermometer Encoding with 10 thresholds \cite{bacellar2024differentiable}. Unless stated otherwise, our network comprises three DWN layers ($k = 2$), each with 12\,000 units; we set the GroupSum temperature to $\tau = 30$ and the interconnect parameters to $C = 8$, $R = 4$, and $\beta = 20$. Reported results are averaged over three random network initialization seeds with fixed train, validation and test splits (error bars denote one standard deviation). All runtimes were measured on an NVIDIA A10 GPU. Our implementation is written in PyTorch (2.1) and extends the public codebase of \cite{bacellar2024differentiable}.

\subsection{Scalable Learnable Interconnect}
\begin{figure}[t!]
    \centering

\begin{tikzpicture}
\begin{axis}[
    width=7.0cm,
    height=4cm,
    xlabel={Wall Clock Time (s)},
    ylabel={Test Accuracy},
    ymin=0.45, ymax=0.58,
    xmin=0,   xmax=2000,        %
    title={Test Accuracy vs.\ Wall Clock Time},
    grid=both,
    legend style={
        at={(1.02,0.5)},        %
      anchor=west,
        legend columns=1,
        cells={align=center},
    },
    every axis plot/.append style={very thick},
]
  \addplot+[
    discard if not={run}{learnable_8_none_20_12000_0},
    mark=none,
    name path=upper,
    opacity=0.3,
    forget plot
  ]
  table[
    col sep=comma,
    trim cells=true,
    x=time,
    y expr=\thisrow{test_mean} + \thisrow{test_std},
  ]{figures/figure_1.csv};

  \addplot+[
    discard if not={run}{learnable_8_none_20_12000_0},
    mark=none,
    name path=lower,
    opacity=0.3,
    forget plot
  ]
  table[
    col sep=comma,
    trim cells=true,
    x=time,
    y expr=\thisrow{test_mean} - \thisrow{test_std},
  ]{figures/figure_1.csv};

  \addplot [ %
    fill opacity=0.2,
    draw=none,
    forget plot
  ] fill between[
    of=upper and lower, %
  ];

  \addplot+[
    discard if not={run}{learnable_8_none_20_12000_0},
    mark=none,
    solid,
  ]
  table[
    col sep=comma,
    trim cells=true,
    x=time,
    y=test_mean,
  ]{figures/figure_1.csv};
  \addlegendentry{Full-Interconnect Learnable (\cite{bacellar2024differentiable})}

  \addplot+[
    discard if not={run}{random_8_none_20_12000_0},
    mark=none,
    name path=upper,
    opacity=0.3,
    forget plot
  ]
  table[
    col sep=comma,
    trim cells=true,
    x=time,
    y expr=\thisrow{test_mean} + \thisrow{test_std},
  ]{figures/figure_1.csv};

  \addplot+[
    discard if not={run}{random_8_none_20_12000_0},
    mark=none,
    name path=lower,
    opacity=0.3,
    forget plot
  ]
  table[
    col sep=comma,
    trim cells=true,
    x=time,
    y expr=\thisrow{test_mean} - \thisrow{test_std},
  ]{figures/figure_1.csv};

  \addplot [
    fill opacity=0.2,
    draw=none,
    forget plot
  ] fill between[
    of=upper and lower, %
  ];

  \addplot+[
    discard if not={run}{random_8_none_20_12000_0},
    mark=none,
    solid,
  ]
  table[
    col sep=comma,
    trim cells=true,
    x=time,
    y=test_mean,
  ]{figures/figure_1.csv};
  \addlegendentry{Fixed Interconnect (\cite{petersen2022deep})}

  \addplot+[
    discard if not={run}{learnable_ste_8_none_20_12000_0},
    mark=none,
    name path=upper,
    opacity=0.3,
    forget plot
  ]
  table[
    col sep=comma,
    trim cells=true,
    x=time,
    y expr=\thisrow{test_mean} + \thisrow{test_std},
  ]{figures/figure_1.csv};

  \addplot+[
    discard if not={run}{learnable_ste_8_none_20_12000_0},
    mark=none,
    name path=lower,
    opacity=0.3,
    forget plot
  ]
  table[
    col sep=comma,
    trim cells=true,
    x=time,
    y expr=\thisrow{test_mean} - \thisrow{test_std},
  ]{figures/figure_1.csv};

  \addplot [
    fill opacity=0.2,
    draw=none,
    forget plot
  ] fill between[
    of=upper and lower, %
  ];

  \addplot+[
    discard if not={run}{learnable_ste_8_none_20_12000_0},
    mark=none,
    solid,
  ]
  table[
    col sep=comma,
    trim cells=true,
    x=time,
    y=test_mean,
  ]{figures/figure_1.csv};
  \addlegendentry{Subset-Connections Learnable \cite{yue2024learninginterpretabledifferentiablelogic})}

  \addplot+[
    discard if not={run}{learnable_ste_8_gradient_estimate_20_12000_0},
    mark=none,
    name path=upper,
    opacity=0.3,
    forget plot
  ]
  table[
    col sep=comma,
    trim cells=true,
    x=time,
    y expr=\thisrow{test_mean} + \thisrow{test_std},
  ]{figures/figure_1.csv};

  \addplot+[
    discard if not={run}{learnable_ste_8_gradient_estimate_20_12000_0},
    mark=none,
    name path=lower,
    opacity=0.3,
    forget plot
  ]
  table[
    col sep=comma,
    trim cells=true,
    x=time,
    y expr=\thisrow{test_mean} - \thisrow{test_std},
  ]{figures/figure_1.csv};

  \addplot [ %
    fill opacity=0.2,
    draw=none,
    forget plot
  ] fill between[
    of=upper and lower, %
  ];

  \addplot+[
    discard if not={run}{learnable_ste_8_gradient_estimate_20_12000_0},
    mark=none,
    solid,
  ]
  table[
    col sep=comma,
    trim cells=true,
    x=time,
    y=test_mean,
  ]{figures/figure_1.csv};
  \addlegendentry{Gradient-Guided Sampling (proposed)}

  \addplot+[
    discard if not={run}{learnable_ste_8_uniform_20_12000_0},
    mark=none,
    name path=upper,
    opacity=0.3,
    forget plot
  ]
  table[
    col sep=comma,
    trim cells=true,
    x=time,
    y expr=\thisrow{test_mean} + \thisrow{test_std},
  ]{figures/figure_1.csv};

  \addplot+[
    discard if not={run}{learnable_ste_8_uniform_20_12000_0},
    mark=none,
    name path=lower,
    opacity=0.3,
    forget plot
  ]
  table[
    col sep=comma,
    trim cells=true,
    x=time,
    y expr=\thisrow{test_mean} - \thisrow{test_std},
  ]{figures/figure_1.csv};

  \addplot [
    fill opacity=0.2,
    draw=none,
    forget plot
  ] fill between[
    of=upper and lower, %
  ];

  \addplot+[
    discard if not={run}{learnable_ste_8_uniform_20_12000_0},
    mark=none,
    solid,
  ]
  table[
    col sep=comma,
    trim cells=true,
    x=time,
    y=test_mean,
  ]{figures/figure_1.csv};
  \addlegendentry{Random Sampling (proposed)}

\end{axis}

\end{tikzpicture}

    \caption{Test set accuracy on CIFAR-10 for the first 2000 seconds of wall clock time for the proposed method with gradient-guided and random sampling, the method from \cite{bacellar2024differentiable} with the full interconnect, the interconnect restricted to \(C=8\) candidates \cite{yue2024learninginterpretabledifferentiablelogic}, and the original fixed interconnect from \cite{petersen2022deep}, all trained in the DWN framework \cite{bacellar2024differentiable}. Only the first layer's interconnect is learned.}
    \label{fig:interconnect-firstlayer}
\end{figure}
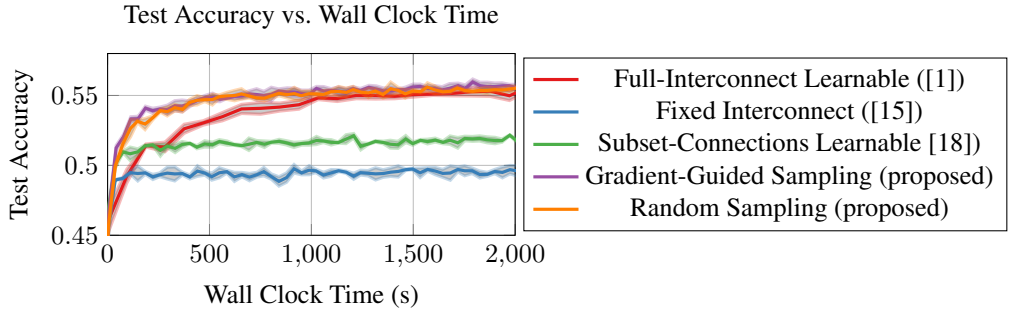

Figure~\ref{fig:interconnect-firstlayer} compares five interconnect schemes when \emph{only the first layer’s} interconnect is trainable: the static fixed random interconnect baseline; the full \(O(GI)\) learnable interconnect of Bacellar et al.~\cite{bacellar2024differentiable}, which optimizes all possible connections via the surrogate gradient; and our scalable \(O(G)\) interconnect, evaluated under random sampling and gradient-guided sampling. Furthermore, the Figure shows training with 8 candidate connections and no resampling as proposed by \cite{yue2024learninginterpretabledifferentiablelogic}. Each method was allotted a two‐hour wall‐clock training budget on an NVIDIA A10 GPU and ran until that budget was exhausted. On a per‐epoch basis, the full learnable interconnect from~\cite{bacellar2024differentiable} achieves the fastest accuracy gains. This is shown in the Appendix, Figure \ref{fig:convergence in epochs}. However, when measured in wall‐clock time, our $O(G)$ interconnect converges more quickly, compensating for the slower per-epoch gains.

\pgfplotsset{
    discard if not/.style 2 args={
        x filter/.code={
            \edef\tempa{\thisrow{#1}}
            \edef\tempb{#2}
            \ifx\tempa\tempb
            \else
                \def\pgfmathresult{inf}
            \fi
        }
    }
}

\makeatletter
\pgfplotstableset{
    discard if not/.style 2 args={
        row predicate/.code={
            \def\pgfplotstable@loc@TMPd{\pgfplotstablegetelem{##1}{#1}\of}
            \expandafter\pgfplotstable@loc@TMPd\pgfplotstablename
            \edef\tempa{\pgfplotsretval}
            \edef\tempb{#2}
            \ifx\tempa\tempb
            \else
                \pgfplotstableuserowfalse
            \fi
        }
    }
}
\makeatother
\begin{wrapfigure}[22]{r}{0.5\textwidth}
  \centering

\begin{tikzpicture}

\begin{groupplot}[                                     %
    group style={
        group size=1 by 1,              %
        vertical sep=2.2cm,
        group name=myplotshello,             %
    },
    every axis plot/.append style={very thick},
    width=7cm,
    height=5cm,
    grid=both,
    legend to name=sharedlegend,        %
    legend style={
    legend columns=3,
    cells={align=center},
    },
]

\nextgroupplot[
    xlabel={Gates per Layer},
    ylabel={Accuracy},
    ymin=0.42, ymax=0.57,
    title={Test Acc.},
    xmin=1500, xmax=12000,
    scaled x ticks=false,
    xtick={3000,6000,12000},
]

\addplot+[
    discard if not={key}{1},
    mark=none,
    error bars/.cd,
        y dir=both,
        y explicit,
]
table[
    col sep=comma,
    trim cells=true,
    x=size,
    y=test,
    y error=test_std,
]{figures/figure_22506.csv};
\addlegendentry{$L=1$}
\addplot+[
    discard if not={key}{2},
    mark=none,
    error bars/.cd,
        y dir=both,
        y explicit,
]
table[
    col sep=comma,
    trim cells=true,
    x=size,
    y=test,
    y error=test_std,
]{figures/figure_22506.csv};
\addlegendentry{$L=2$}
\addplot+[
    discard if not={key}{3},
    mark=none,
    error bars/.cd,
        y dir=both,
        y explicit,
]
table[
    col sep=comma,
    trim cells=true,
    x=size,
    y=test,
    y error=test_std,
]{figures/figure_22506.csv};
\addlegendentry{$L=3$}

\end{groupplot}
\path (myplotshello c1r1.south) ++(-0.0cm,-1.0cm)     %
      node[anchor=north] {\pgfplotslegendfromname{sharedlegend}};

\end{tikzpicture}
\caption{Test accuracy for interconnects up to layer $L=\{1,2,3\}$ trainable. Gates per layer on the x-axis: $\{1500,3000,6000,12000\}$.} 
\label{fig:overfitting}
\end{wrapfigure}
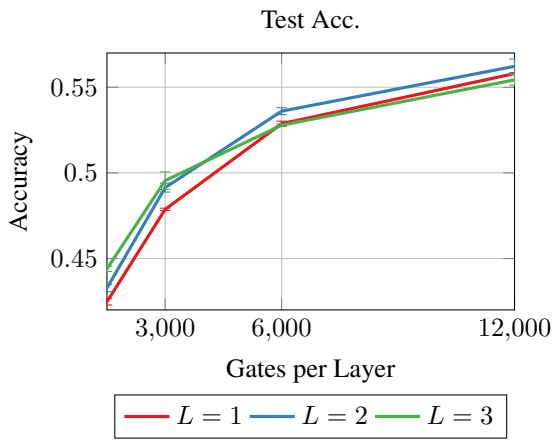

Table \ref{tab:diff_sizes} shows the final test set accuracies for the same networks as investigated in Figure \ref{fig:interconnect-firstlayer}, including results for larger network sizes. Our method recovers the same test-set accuracy as the full learnable interconnect from \cite{bacellar2024differentiable}. This is the case for both random sampling and the more involved gradient-based estimation sampling. For larger network sizes the gradient-based estimation sampling method improves upon the random sampling.

\paragraph{Scalability.}

Our $O(G)$ interconnect reduces the memory footprint from \cite{bacellar2024differentiable} full $O(GI)$ weight tensor (which requires on the order of gigabytes) by three orders of magnitude (only megabytes). Concretely, the memory cost in bytes is given for \cite{bacellar2024differentiable} as $
    M_{\mathrm{full}}
    = k \times G \times I \times 4$ versus ours: $M_{\mathrm{sparse}}
    = k \times G \times C \times 4 \times 2.$ Note the additional $\times2$ accounting for the needed candidate indices. With $k=2$, $I=30\,720$ input bits for CIFAR-10, and $C=8$ candidates.  At $G=12\,000$ gates this gives $\approx 3$ GB versus $ \approx 1.5$ MB in required memory. Doubling to $G=24\,000$ pushes $M_{\mathrm{full}}\approx5.9$ GB—beyond what remains on a 24GB A10 GPU when accounting for other tensors. In contrast, our $O(G)$ interconnect only grows to $\approx3$MB at 24000 gates, easily fitting and achieving the same accuracy for the smaller network (see Table~\ref{tab:diff_sizes}).

\begin{table}[t!]
\centering
\caption{Scalability with layer width on CIFAR-10. All runs are trained within a fixed time budget of 2 hours; the number of epochs that could be performed in this time is given in the column epochs. All networks consist of three layers of width $G$. Averaged over three different network initialization seeds.}
\label{tab:diff_sizes}
\begin{tabular}{lccc}
\toprule
Method &  Width $G$ & Acc. & Epochs  \\
\midrule
Fixed Interconnect \cite{petersen2022deep}          &  12\,000 & 0.498±0.004 & 1944\\
Fixed Interconnect \cite{petersen2022deep}          &  24\,000 & 0.513±0.004 & 1228 \\
\midrule
Full-Interconnect \cite{bacellar2024differentiable} &  12\,000 & 0.559±0.001 & 77 \\
\midrule
Subset-Connections Learnable \cite{yue2024learninginterpretabledifferentiablelogic}    &  12\,000 & 0.516±0.002 & 1954 \\
Subset-Connections Learnable \cite{yue2024learninginterpretabledifferentiablelogic}  &  24\,000 & 0.532±0.001 & 1178 \\
\midrule
Random Sampling (proposed) &  12\,000 & 0.559±0.001 & 2027 \\
Random Sampling (proposed) &  24\,000 & 0.566±0.002 & 1151 \\
\midrule
Gradient-Guided (proposed) & 12\,000 & 0.561±0.003 & 1203 \\
Gradient-Guided (proposed) & 24\,000 &\textbf{0.575±0.002} & 641 \\

\bottomrule
\end{tabular}

\medskip
\end{table}

\paragraph{Layer-Wise Optimization.}  

\begin{figure}
    \centering

\begin{tikzpicture}

\begin{groupplot}[                                     %
    group style={
        group size=2 by 1,              %
        vertical sep=2.2cm,
        group name=myplots,             %
    },
    every axis plot/.append style={very thick},
    width=7cm,
    height=4cm,
    grid=both,
    legend style={
    legend columns=3,
    cells={align=center},
    },
]

\nextgroupplot[
    xlabel={Epoch},
    ylabel={Accuracy},
    xmin=0, xmax=2100,
    ymin=0.52, ymax=0.58,
    title={Validation Acc.},
    xtick={0, 666, 1000, 1332, 2100},
]

\addplot+[
    discard if not={run}{learnable_ste_8_gradient_estimate_20_12000_1},
    mark=none,
    solid,
  ]
  table[
    col sep=comma,
    trim cells=true,
    x=epoch,
    y=test_mean,
  ]{figures/overfitting2506.csv};
  
    \addplot+[
    discard if not={run}{learnable_ste_8_gradient_estimate_20_12000_2},
    mark=none,
    solid,
  ]
  table[
    col sep=comma,
    trim cells=true,
    x=epoch,
    y=test_mean,
  ]{figures/overfitting2506.csv};

    \addplot+[
    discard if not={run}{learnable_ste_8_gradient_estimate_20_12000_3},
    mark=none,
    solid,
  ]
  table[
    col sep=comma,
    trim cells=true,
    x=epoch,
    y=test_mean,
  ]{figures/overfitting2506.csv};

\nextgroupplot[
    xlabel={Epoch},
    ylabel={},
    xmin=0, xmax=2100,
    ymin=0.6, ymax=0.96,
    title={Train Acc.},
    xtick={0, 666,1000, 1332, 2100},
    legend to name=sharedlegendthree,
]

\addplot+[
    discard if not={run}{learnable_ste_8_gradient_estimate_20_12000_1},
    mark=none,
    solid,
  ]
  table[
    col sep=comma,
    trim cells=true,
    x=epoch,
    y=train_mean,
  ]{figures/overfitting2506.csv};
  \addlegendentry{$L=1$}
    \addplot+[
    discard if not={run}{learnable_ste_8_gradient_estimate_20_12000_2},
    mark=none,
    solid,
  ]
  table[
    col sep=comma,
    trim cells=true,
    x=epoch,
    y=train_mean,
  ]{figures/overfitting2506.csv};
\addlegendentry{$L=2$}
    \addplot+[
    discard if not={run}{learnable_ste_8_gradient_estimate_20_12000_3},
    mark=none,
    solid,
  ]
  table[
    col sep=comma,
    trim cells=true,
    x=epoch,
    y=train_mean,
  ]{figures/overfitting2506.csv};
\addlegendentry{$L=3$}
\end{groupplot}

\path (myplots c1r1.south) ++(3.22cm,-0.7cm)     %
      node[anchor=north] {\pgfplotslegendfromname{sharedlegendthree}};
\end{tikzpicture}
\caption{%
Accuracy under the three layer-wise interconnect–training schedules \(L\in\{1,2,3\}\) for the model with $12000$ gates per layer.  Averaged over 3 seeds. Standard deviation not shown. \textbf{Left:} validation accuracy. \textbf{Right:} training accuracy.} 
\label{fig:overfitting12000}
\end{figure}
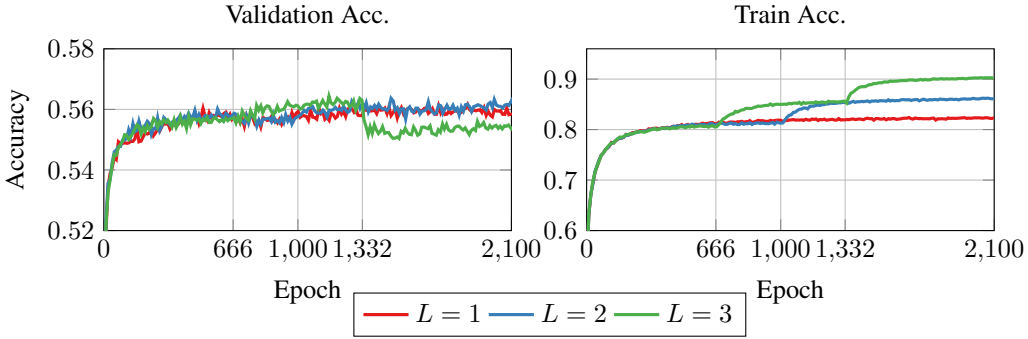

Due to the memory savings of our $O(G)$ interconnect, we can for the first time, train the interconnect beyond the first layer in very wide DBN models. We allocate a fixed budget of $E=2{,}100$ total epochs across the first $L$ layers by assigning $E_{\mathrm{train}} = \frac{2{,}000}{L}$ epochs to train each of the $L$ layers sequentially and $E_{\mathrm{FT}} = 100$ epochs to fine-tune all gates together. At the start of each of the epochs allocated to training one of the interconnects, we reinitialize our learning rate and restart cosine decay. Figure~\ref{fig:overfitting} plots the test accuracy after this schedule for $L\in\{1,2,3\}$ on networks with up to 12000 gates per layer. Train and validation accuracy for the largest model are shown in Figure~\ref{fig:overfitting12000}. 

We observe clear signs of overfitting beyond the second layer: on smaller networks (< 6000 gates per layer), training the interconnect of more layers yields modest gains, whereas for larger widths ($\geq6\,000$ gates), pushing beyond $L=2$ causes test accuracy to decline. Figure~\ref{fig:overfitting12000} shows that although the train‐set accuracy continues to rise with $L$, the validation‐set accuracy drops—clear signs of overfitting. Note that there are clear jumps in train and validation accuracy observable, when interconnect optimization of the next layer is enabled. For $L=2$ this jump occurs at 1000 epochs, for $L=3$ at $666$ and $1332$ epochs.

\subsection{Pruning}

\begin{figure}
    \centering
    \begin{tikzpicture}

\begin{groupplot}[                                     %
    group style={
        group size=1 by 3,              %
        vertical sep=0.7cm,
        horizontal sep=0.7cm,
        group name=myplotstwo,             %
    },
    every axis plot/.append style={very thick},
    width=14cm,
    height=4cm,
    grid=both,
    legend style={
    legend columns=4,
    cells={align=center},
    },
]

\nextgroupplot[
    xlabel={},
    ylabel={Accuracy},
    ymin=0.3, ymax=0.6,
    title={$L=1$},
    xmin=0, xmax=36000,
    scaled x ticks=false,
    xticklabels={},
]

\addplot+[
    discard if not={key}{Greedy_0},
    mark=o,
    color=orange
]
table[
    col sep=comma,
    trim cells=true,
    x=size,
    y=accuracy,
]{figures/plot_layers_1.csv};
\addplot+[
    discard if not={key}{Greedy_1},
    mark=o,
    color=orange
]
table[
    col sep=comma,
    trim cells=true,
    x=size,
    y=accuracy,
]{figures/plot_layers_1.csv};
\addplot+[
    discard if not={key}{Greedy_2},
    mark=o,
    color=orange
]
table[
    col sep=comma,
    trim cells=true,
    x=size,
    y=accuracy,
]{figures/plot_layers_1.csv};

\addplot+[
    discard if not={key}{Greedy_3},
    mark=o,
    color=orange
]
table[
    col sep=comma,
    trim cells=true,
    x=size,
    y=accuracy,
]{figures/plot_layers_1.csv};

\addplot+[
    discard if not={key}{Similarity_0},
    mark=o,
    color=blue,
]
table[
    col sep=comma,
    trim cells=true,
    x=size,
    y=accuracy,
]{figures/plot_layers_1.csv};
\addplot+[
    discard if not={key}{Similarity_1},
    mark=o,
    color=blue,            %
]
table[
    col sep=comma,
    trim cells=true,
    x=size,
    y=accuracy,
]{figures/plot_layers_1.csv};
\addplot+[
    discard if not={key}{Similarity_2},
    mark=o,
    color=blue,            %
]
table[
    col sep=comma,
    trim cells=true,
    x=size,
    y=accuracy,
]{figures/plot_layers_1.csv};

\addplot+[
    discard if not={key}{Similarity_3},
    mark=o,
    color=blue,            %
]
table[
    col sep=comma,
    trim cells=true,
    x=size,
    y=accuracy,
]{figures/plot_layers_1.csv};

\addplot+[
    only marks,
    discard if not={key}{Unmodified},
    mark options={fill=red},
    mark=o,
    color=red
]
table[
    col sep=comma,
    trim cells=true,
    x=size,
    y=accuracy,
]{figures/plot_layers_1.csv};

\addplot+[
    only marks,
    discard if not={key}{Heuristic},
    mark=triangle*,
    mark options={fill=red,   %
                  draw=red, %
                  solid},
]
table[
    col sep=comma,
    trim cells=true,
    x=size,
    y=accuracy,
]{figures/plot_layers_1.csv};

\nextgroupplot[
    xlabel={},
    ylabel={Accuracy},
    ymin=0.3, ymax=0.6,
    title={$L=2$},
    xmin=0, xmax=36000,
    scaled x ticks=false,
    xticklabels={},
]

\addplot+[
    discard if not={key}{Greedy_0},
    mark=o,
    color=orange
]
table[
    col sep=comma,
    trim cells=true,
    x=size,
    y=accuracy,
]{figures/plot_layers_2.csv};
\addplot+[
    discard if not={key}{Greedy_1},
    mark=o,
    color=orange
]
table[
    col sep=comma,
    trim cells=true,
    x=size,
    y=accuracy,
]{figures/plot_layers_2.csv};
\addplot+[
    discard if not={key}{Greedy_2},
    mark=o,
    color=orange
]
table[
    col sep=comma,
    trim cells=true,
    x=size,
    y=accuracy,
]{figures/plot_layers_2.csv};

\addplot+[
    discard if not={key}{Greedy_3},
    mark=o,
    color=orange
]
table[
    col sep=comma,
    trim cells=true,
    x=size,
    y=accuracy,
]{figures/plot_layers_2.csv};

\addplot+[
    discard if not={key}{Similarity_0},
    mark=o,
    color=blue,
]
table[
    col sep=comma,
    trim cells=true,
    x=size,
    y=accuracy,
]{figures/plot_layers_2.csv};
\addplot+[
    discard if not={key}{Similarity_1},
    mark=o,
    color=blue,            %
]
table[
    col sep=comma,
    trim cells=true,
    x=size,
    y=accuracy,
]{figures/plot_layers_2.csv};
\addplot+[
    discard if not={key}{Similarity_2},
    mark=o,
    color=blue,            %
]
table[
    col sep=comma,
    trim cells=true,
    x=size,
    y=accuracy,
]{figures/plot_layers_2.csv};

\addplot+[
    discard if not={key}{Similarity_3},
    mark=o,
    color=blue,            %
]
table[
    col sep=comma,
    trim cells=true,
    x=size,
    y=accuracy,
]{figures/plot_layers_2.csv};

\addplot+[
    only marks,
    discard if not={key}{Unmodified},
    mark options={fill=red},
    mark=o,
    color=red
]
table[
    col sep=comma,
    trim cells=true,
    x=size,
    y=accuracy,
]{figures/plot_layers_2.csv};

\addplot+[
    only marks,
    discard if not={key}{Heuristic},
    mark=triangle*,
    mark options={fill=red,   %
                  draw=red, %
                  solid},
]
table[
    col sep=comma,
    trim cells=true,
    x=size,
    y=accuracy,
]{figures/plot_layers_2.csv};

\nextgroupplot[
    xlabel={Gates in Network},
    ylabel={Accuracy},
    ymin=0.3, ymax=0.6,
    title={$L=3$},
    xmin=0, xmax=36000,
    scaled x ticks=false,
    legend to name=sharedlegendtwo,
]

\addplot+[
    discard if not={key}{Greedy_0},
    mark=o,
    color=orange,
    forget plot
]
table[
    col sep=comma,
    trim cells=true,
    x=size,
    y=accuracy,
]{figures/plot_layers_3.csv};
\addplot+[
    discard if not={key}{Greedy_1},
    mark=o,
    color=orange,
     forget plot
]
table[
    col sep=comma,
    trim cells=true,
    x=size,
    y=accuracy,
]{figures/plot_layers_3.csv};
\addplot+[
    discard if not={key}{Greedy_2},
    mark=o,
    color=orange,
     forget plot
]
table[
    col sep=comma,
    trim cells=true,
    x=size,
    y=accuracy,
]{figures/plot_layers_3.csv};

\addplot+[
    discard if not={key}{Greedy_3},
    mark=o,
    color=orange
]
table[
    col sep=comma,
    trim cells=true,
    x=size,
    y=accuracy,
]{figures/plot_layers_3.csv};

\addlegendentry{Greedy Pruning (baseline)}

\addplot+[
    discard if not={key}{Similarity_0},
    mark=o,
    color=blue,
     forget plot
]
table[
    col sep=comma,
    trim cells=true,
    x=size,
    y=accuracy,
]{figures/plot_layers_3.csv};
\addplot+[
    discard if not={key}{Similarity_1},
    mark=o,
    color=blue,            %
     forget plot
]
table[
    col sep=comma,
    trim cells=true,
    x=size,
    y=accuracy,
]{figures/plot_layers_3.csv};
\addplot+[
    discard if not={key}{Similarity_2},
    mark=o,
    color=blue,            %
     forget plot
]
table[
    col sep=comma,
    trim cells=true,
    x=size,
    y=accuracy,
]{figures/plot_layers_3.csv};

\addplot+[
    discard if not={key}{Similarity_3},
    mark=o,
    color=blue,            %
]
table[
    col sep=comma,
    trim cells=true,
    x=size,
    y=accuracy,
]{figures/plot_layers_3.csv};

\addlegendentry{Similarity Pruning}

\addplot+[
    only marks,
    discard if not={key}{Unmodified},
    mark options={fill=red},
    mark=o,
    color=red
]
table[
    col sep=comma,
    trim cells=true,
    x=size,
    y=accuracy,
]{figures/plot_layers_3.csv};

\addlegendentry{Trivial Pruning}
\addplot+[
    only marks,
    discard if not={key}{Heuristic},
    mark=triangle*,
    mark options={fill=red,   %
                  draw=red, %
                  solid},
]
table[
    col sep=comma,
    trim cells=true,
    x=size,
    y=accuracy,
]{figures/plot_layers_3.csv};
\addlegendentry{Logic Equivalence}

\end{groupplot}

\end{tikzpicture}
\vspace{-0.4cm}

\begin{tikzpicture}
  \node[anchor=north, xshift=0cm] at (0,0) {%
    \pgfplotslegendfromname{sharedlegendtwo}%
  };
\end{tikzpicture}

    \caption{Test set accuracy of pruning techniques for differing layer interconnects being optimized.}
    \label{fig:pruning}
\end{figure}
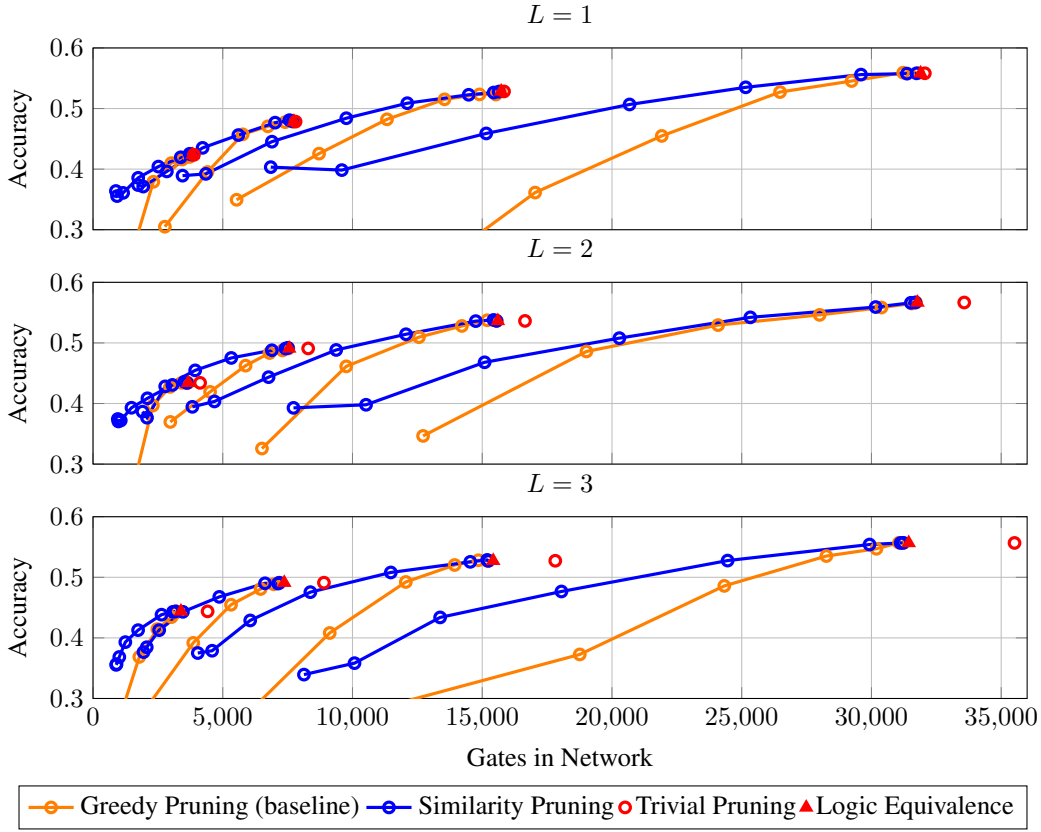

Figure~\ref{fig:pruning} shows test accuracy against the number of gates that survive post-training pruning for four layer widths \(1500\), \(3000\), \(6000\), \(12000\) and the three interconnect-training schedules \(L\in\{1,2,3\}\). Learned skip connections are not counted, but \(0\) and \(1\) gates are. Each curve starts at a red marker that represents the \emph{trivially} pruned network immediately after training—that is, with all disconnected gates removed but before any of the explicit pruning schemes are applied. The following paragraphs compare the three pruning strategies on these
checkpoints.

\paragraph{Logic equivalence pruning.}
When only the first interconnect matrix is learned (\(L=1\)), virtually no additional gates are removed by the SAT-based logic equivalence pass, suggesting that the training procedure already avoids redundant symbolic structures. Extending interconnect optimisation beyond the first layer (\(L=2,3\)) introduces more duplicate functions, and the SAT check therefore eliminates a larger absolute number of gates. However, the \emph{final} gate count converges to roughly the same number of gates as for \(L=1\), implying that logic equivalence pruning merely neutralises the duplicates created by deeper interconnect learning rather than yielding a net reduction below the \(L=1\) baseline. Table \ref{tab:logic_eq} shows the number of gates after trivial and logic equivalence pruning for the largest network.

\begin{table}[t] 
\centering
\caption{Gate counts \emph{after} trivial pruning and \emph{after} the subsequent logic equivalence pass. One seed is shown.}
\label{tab:logic_eq}
\begin{tabular}{cccc}
\toprule
$L$ &
test acc. & trivial &
logic equiv. \\ \midrule
1 &0.558 &32\,058 & 31\,897 \\
2 &0.566 &33\,575 & 31\,774 \\
3 &0.556 &35\,514 & 31\,436 \\ \bottomrule
\end{tabular}
\end{table}

\paragraph{Greedy \& Similarity pruning.}
Of the two data-driven schemes, similarity pruning consistently preserves accuracy better than the greedy baseline.  On our smallest configuration—three layers totaling \(4\,500\) gates (\(1\,500\) per layer)—similarity pruning can compress a \(9\,000\)-gate model (\(3\,000\) per layer) down to the same approximate number of gates while matching the accuracy of a \(4\,500\)-gate network trained from scratch.  The same qualitative pattern holds at larger widths: accuracy degrades much more gracefully under similarity pruning than under greedy pruning. %

\section{Related Work}

We review prior work in three areas relevant to this paper: differentiable training of Boolean networks, learnable interconnect mechanisms, and pruning/compression techniques.

\paragraph{Differentiable Boolean Networks}  
Neural Architecture Search (NAS) has popularized the idea of relaxing discrete architectural choices into a continuous domain to enable gradient‐based optimization. Differentiable Architecture Search (DARTS) \cite{liudarts} uses Softmax to mix cell‐ or layer‐level operations, jointly learning topology and weights. At a finer granularity, Chen et al. \cite{chen2020learning} employ Gumbel‐Softmax sampling to learn symbolic expression graphs by selecting among candidate operators in a differentiable manner. These approaches demonstrate that discrete connectivity can be learned end‐to‐end—a principle that has subsequently been adapted at the level of individual Boolean gates.

Building on this principle, Petersen et al. \cite{petersen2022deep} were the first to apply it directly to Boolean circuits. In DLNs, each 2-input gate is represented during training as a Softmax‐normalized mixture over its 16 truth tables, enabling end-to-end gradient descent; at inference time each gate is “hardened” to its most probable function, yielding fast bitwise circuits.  Kim et al. \cite{kim2023stochastic} later replaced the Softmax mixture with a Gumbel-Softmax distribution \cite{maddison2017concrete, jang2017categorical}, improving gate‐selection learning. DLNs have shown promising performance in convolutional \cite{petersen2024convolutional} and recurrent \cite{Pietro2025LogicCA} settings. 

More recently, Bacellar et al. \cite{bacellar2024differentiable} proposed DWNs, which map Boolean functions to FPGA‐friendly Look-up tables (LUTs) and train them via an Extended Finite Difference method, yielding competitive performance and improved deployment possibilities. Trained DWNs with 2-input LUT tables can be converted to DLNs and vice versa, making both pruning techniques and interconnect learning methods applicable to each. Orthogonally, Benamira et al. \cite{DBLP:conf/ijcai/BenamiraPYGH24} convert trained real‐valued networks into Boolean layers—though the resulting circuits are much larger.

\paragraph{Learnable Interconnect in Boolean Networks}  
Original DLNs \cite{petersen2022deep,petersen2024convolutional} use a fixed, random connection pattern, leaving room for performance improvements. Both Yue et al. \cite{yue2024learninginterpretabledifferentiablelogic} and the DWNs introduced by Bacellar et al.~\cite{bacellar2024differentiable} employ a straight‐through estimator (STE) to select the argmax connections in the forward pass while backpropagating through the Softmax. Yet the interconnect in \cite{bacellar2024differentiable} scales with both input layer size and the number of Boolean functions in each layer, making it impractical for the large Boolean layers required in image tasks. Although \cite{yue2024learninginterpretabledifferentiablelogic} introduces a limited subset of possible connections, they do not consider resampling this subset.

\paragraph{Pruning and Compression}
Magnitude‐based pruning \cite{han2015learning} and other pruning methods (for a survey, see \cite{cheng2024survey}) have long been used to compress overparameterized floating-point neural networks by removing low‐importance weights. More recent mask‐based approaches \cite{kang2020operation} optimize continuous masks over connections, akin to our interconnect learning mechanism. In the DBN context, pruning can eliminate redundant gates or connections whose outputs are constant or unused, yielding more compact circuits. Saliency-map-based interpretability and pruning with DLNs were explored by Wormald et al. \cite{wormald2025explogic}, albeit at some cost to accuracy. Additionally, it is unclear if their methods scale beyond the MNIST dataset.

\section{Summary \& Discussion}

Optimising the \emph{first‐layer} interconnect consistently raises test-set accuracy, and the proposed parameter-efficient learning scheme removes the memory and runtime bottlenecks that previously limited the utilization of such an approach. For deeper layers the picture changes.  As we optimise the interconnect of deeper layers, the model begins to overfit, suggesting that the expressive power unlocked by flexible wiring must be balanced by stronger regularisation. Previous work on DBNs explored some techniques to reduce overfitting, including weight decay~\cite{petersen2024convolutional}, spectral regularization~\cite{bacellar2024differentiable}, and data augmentation~\cite{bacellar2024differentiable}; a systematic evaluation of these remains open.

Pruning plays two distinct roles. Logic equivalence pruning is lightweight insurance: it eliminates the redundancy introduced by multi-layer interconnect learning. Data-driven pruning is more aggressive and may hurt accuracy. However, in the case of smaller networks, utilizing our similarity-based pruning strategy can recover the accuracy of a from scratch trained network.

Finally, our study only considers the CIFAR-10 dataset. Extending the analysis to larger image sets or to non-image tasks will be essential for clarifying how broadly these observations generalise. We do not perform a comparison with the pruning scheme introduced by~\cite{wormald2025explogic}, which was only applied to MNIST networks with a smaller input space. None of our pruning schemes exploit label information; doing so might reveal additional redundancies.

\section{Contributions}

In summary, we: (i) introduce a scalable differentiable interconnect for wide DBNs, (ii) provide the first systematic study of multi-layer interconnect learning, (iii) formalize provably sound logic equivalence pruning for DBNs, and (iv) identify the proposed similarity pruning as outperforming our magnitude-based baseline.

Our results also highlight susceptibility to overfitting in DBNs, underscoring the need for research in this direction

\bibliographystyle{plain}
\bibliography{references}

\newpage
\appendix

\section{Appendix}
\subsection{Hyperparemeter Sensititvity}

For hyperparameter evaluation, we vary only along one dimension, otherwise, we use the default values: $C = 8$, $R = 4$, and $\beta = 20$. We do not ablate $R$, instead we set it to $R=\frac{C}{2}$. All runs are performed on an NVIDIA A10 GPU. The models are investigated on CIFAR-10 with a layer-width of 12000. Figures \ref{fig:beta} and \ref{fig:connectivity} show that our method is largely unaffected by differing hyperparameter selection. All experiments are performed over 3 seeds.

\begin{figure}[h!]

  \centering
  \includegraphics[width=0.5\linewidth]{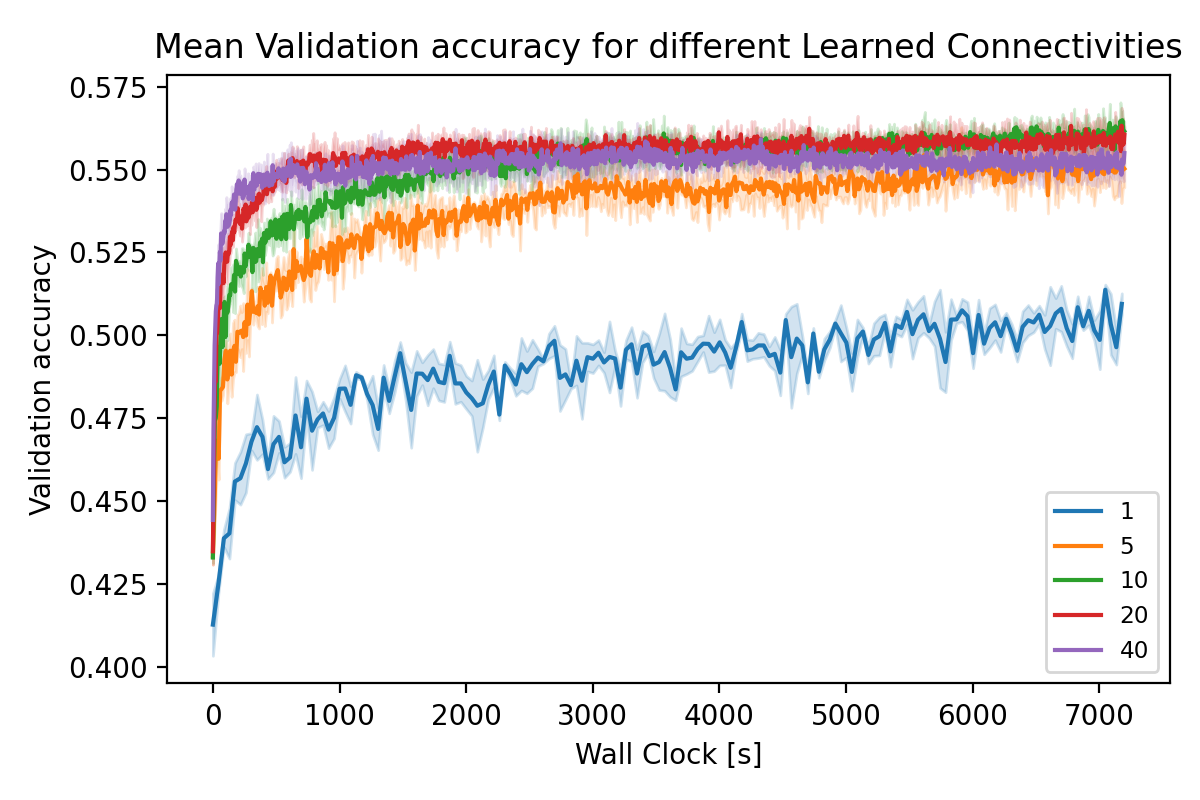}%
    \caption{Varying $\beta$ values ablated. It appears that larger beta values (i.e. more seldom updating of gate parameters) are beneficial at the start of training and lead to faster gains. However, later in training, faster updates are beneficial. Future work should consider a scheduled $\beta$ rate.}
    \label{fig:beta}
\end{figure}%

\begin{figure}[h!]

  \centering
  \includegraphics[width=0.5\linewidth]{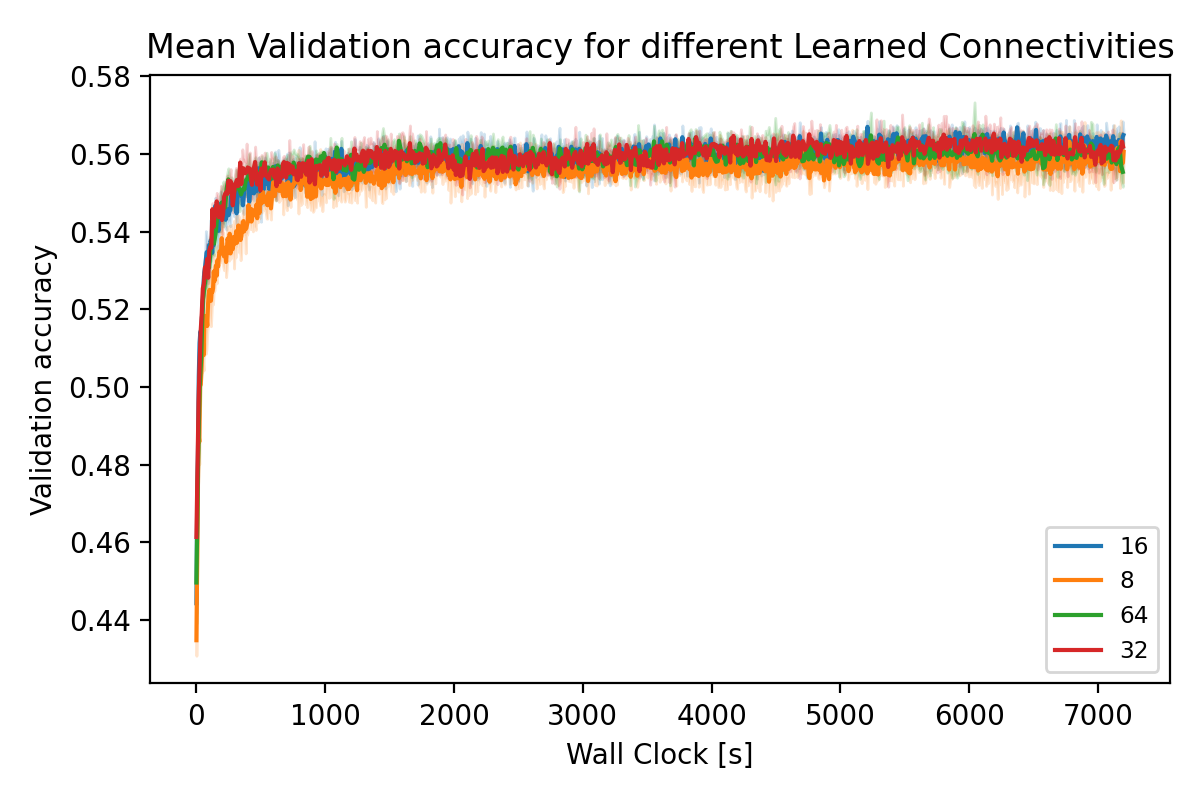}%
    \caption{Varying $C$. A higher connectivity leads to more memory requirements, but leads to faster initial convergence. Overall, similar validation accuracies are attained.}
    \label{fig:connectivity}
\end{figure}%

\subsection{Convergence Time in Epochs}

Figure \ref{fig:convergence in epochs} shows the convergence time in epochs for our investigated interconnect learning schemes.
\begin{figure}[h!]
    \centering
    \includegraphics[width=0.8\linewidth]{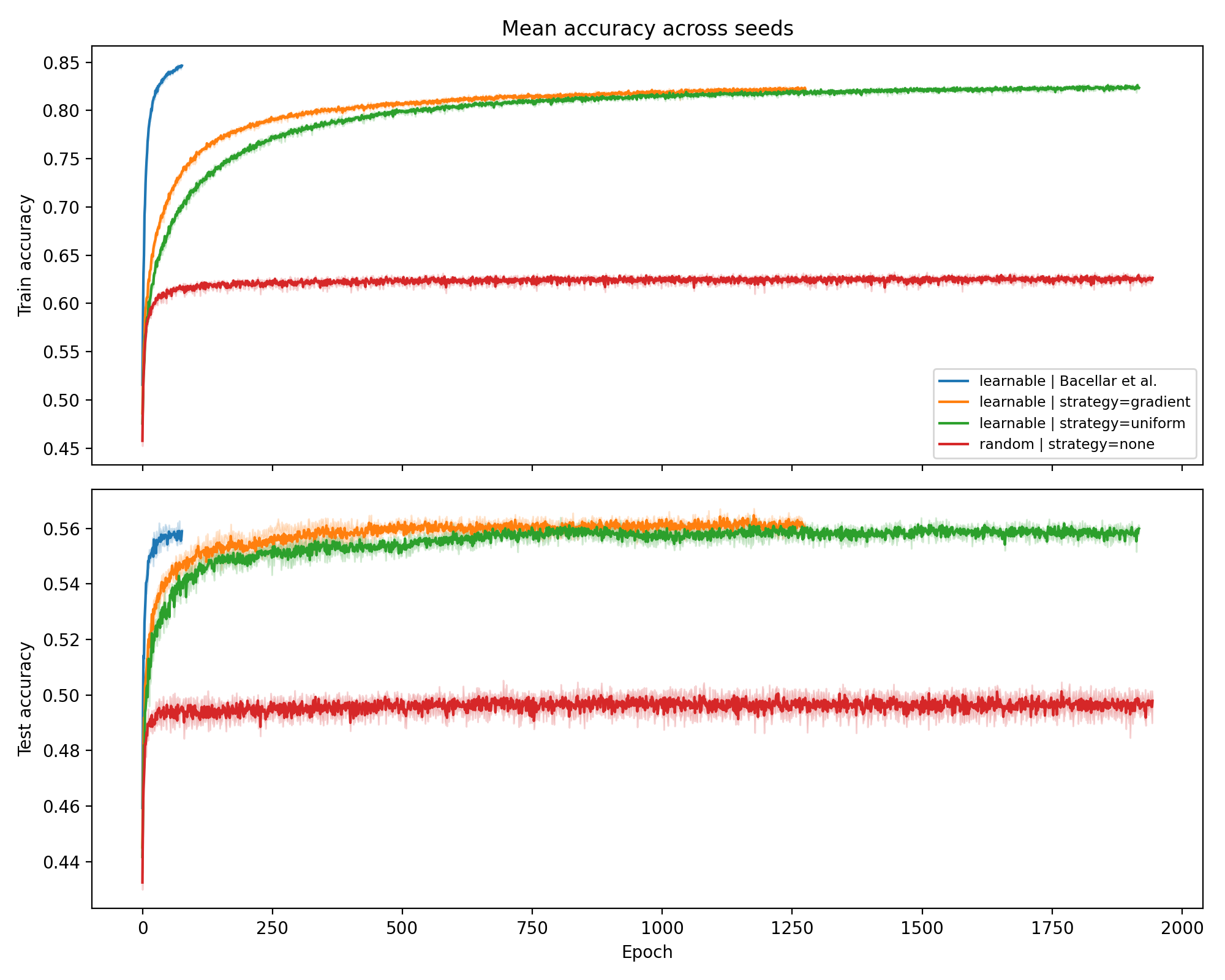}
    \caption{Accuracy vs. number of epochs for our investigated interconnect schemes. All schemes were run for two hours. }
    \label{fig:convergence in epochs}
\end{figure}

\end{document}